\newtheorem{theorem}{Theorem}
\newtheorem{reduction}{Reduction}
\newcommand{\R}{\mathbb{R}}  % set of real numbers
\newcommand{\w}{\mathbf{w}}  % vector w
\newcommand{\x}{\mathbf{x}}  % vector x
\newcommand{\z}{\mathbf{z}}  % vector z
\newcommand{\I}{\mathbf{I}}  % matrix I
\newcommand{\X}{\mathbf{X}}  % matrix X
\newcommand{\one}{\mathbf{1}}  % all 1's vector 
\newcommand{\zero}{\mathbf{0}}  % all 0's vector
\newcommand{\M}{\mathbf{M}}
\newcommand{\hx}{\mathbf{\hat{x}}}
\newcommand{\NP}{\textsf{NP}\xspace}
\newcommand{\BPP}{\textsf{BPP}\xspace}
\newcommand{\E}{\ensuremath{\mathbb{E}}}
\def\setcover{\textsf{Set Cover}\xspace}
\def\CNF{\textsf{3CNF}\xspace}
\def\SAT{\textsf{SAT}\xspace}
\def\algosr{\textsf{Alg}_\textsf{OSR}}
\def\algsat{\textsf{Alg}_\textsf{SAT}}
\def\hy{\hat{y}}
\def\hS{\hat{S}}
\DeclareMathOperator{\poly}{poly}
\title{Online Sparse Linear Regression}
\author{Dean Foster \\
Amazon \\
\texttt{\small dean@foster.net}
\and
Satyen Kale \\
Yahoo Research\\
\texttt{\small satyen@yahoo-inc.com}
\and
Howard Karloff \\
Goldman Sachs \\
\texttt{\small howard@cc.gatech.edu}}
\date{}
\begin{document}

\maketitle

% \sloppy

\begin{abstract}
We consider the online sparse linear regression problem, which is the problem of sequentially making predictions observing only a limited number of features in each round, to minimize regret with respect to the best sparse linear regressor, where prediction accuracy is measured by square loss. We give an {\em inefficient} algorithm that obtains regret bounded by $\tilde{O}(\sqrt{T})$ after $T$ prediction rounds. We complement this result by showing that no algorithm running in polynomial time per iteration can achieve regret bounded by $O(T^{1-\delta})$ for any constant $\delta > 0$ unless $\NP \subseteq \BPP$. This computational hardness result resolves an open problem presented in COLT 2014~\citep{open-problem} and also posed by \citet{andras}. This hardness result holds even if the algorithm is allowed to access more features than the best sparse linear regressor up to a logarithmic factor in the dimension.
\end{abstract}

\section{Introduction}

In various real-world scenarios, features for examples are constructed by running some computationally expensive algorithms. With resource constraints, it is essential to be able to make predictions with only a limited number of features computed per example. One example of this scenario, from \citep{CBSS}, is medical diagnosis of a disease, in
which each feature corresponds to a medical test that the patient in question can undergo. Evidently, it is undesirable to subject a patient to a battery of medical tests, for medical as well as cost reasons. Another example from the same paper is a search engine, where a ranking of web pages must be generated for each incoming user query and the limited amount of time allowed to answer a query imposes restrictions on the number of attributes that can be evaluated in the process. In both of these problems, predictions need to be made sequentially as patients or search queries arrive online, learning a good model in the process.

In this paper, we model the problem of prediction with limited access to features in the most natural and basic manner as an online sparse linear regression problem. In this problem, an online learner makes real-valued predictions for the labels of examples arriving sequentially over a number of rounds. Each example has $d$ features that can be potentially accessed by the learner. However, in each round, the learner is restricted to choosing an arbitrary subset of features of size at most $k$, a budget parameter. The learner then acquires the values of the subset of features, and then makes its prediction, at which point the true label of the example is revealed to the learner. The learner suffers a loss for making an incorrect prediction (for simplicity, we use square loss in this paper). The goal of the learner is to make predictions with total loss comparable to the loss of the best sparse linear regressor with a bounded norm, where the term {\em sparse} refers to the fact that the linear regressor has nonzero weights on at most $k$ features. To measure the performance of the online learner, we use the standard notion of {\em regret}, which is the difference between the total loss of the online learner and the total loss of the best sparse linear regressor.

While regret is the primary performance metric, we are also interested in {\em efficiency} of the online learner. Ideally, we desire an online learning algorithm that minimizes regret while making predictions {\em efficiently}, i.e., in polynomial time (as a function of $d$ and $T$). In this paper, we prove that this goal is impossible unless there is a randomized polynomial-time algorithm for deciding satisfiability of \CNF formulas, the canonical \NP-hard problem. This computational hardness result resolves open problems from \citep{open-problem} and \citep{andras}. In fact, the computational hardness persists even if the online learner is given the additional flexibility of choosing $k' = D \log(d) k$ features for any constant $D > 0$. In light of this result, in this paper we also give an {\em inefficient} algorithm for the problem which queries $k' \geq k + 2$ features in each round, that runs in $O({d \choose k} k')$ time per round,
and that obtains regret bounded by $O(\tfrac{d^2}{(k' - k)^2}\sqrt{k \log(d)T})$.

\section{Related Work and Known Results}

A related setting is attribute-efficient learning \citep{CBSS,HK,KS}. This is a batch learning problem in which the examples are generated i.i.d., and the goal is to simply output a linear regressor using only a limited number of features per example with bounded excess risk compared to the optimal linear regressor, when given {\em full access} to the features at test time. While the aforementioned papers give efficient, near-optimal algorithms for this problem, these algorithms do not work in the online sparse regression setting in
which we are interested because here we are required to make predictions only using a limited number of features.

In \citep{open-problem}, a simple algorithm has been suggested, which is  based on running a bandit algorithm in which the actions correspond to selecting one of ${d \choose k}$ subsets of coordinates of size $k$ at regular intervals, and within each interval, running an online regression algorithm (such as the Online Newton-Step algorithm of \citet{HKKA}) over the $k$ coordinates chosen by the bandit algorithm. This algorithm, with the right choice of interval lengths, has a regret bound of $O(k^2d^{k/3}T^{2/3}\log(T/d))$. The algorithm has exponential dependence on $k$ both in running time and the regret. Also, \citet{open-problem} sketches a different algorithm with performance guarantees similar to the algorithm presented in this paper; our work builds upon that sketch and gives tighter regret bounds.

\citet{andras} consider a very closely related setting (called {\em online probing}) in which features and labels may be obtained by the learner at some cost (which may be different for different features), and this cost is factored into the loss of the learner. In the special case of their setting corresponding to the problem considered here, they given an algorithm, \textsc{LQDExp3}, which relies on discretizing all $k$-sparse weight vectors and running an exponential-weights experts algorithm on the resulting set with stochastic loss estimators, obtaining a $O(\sqrt{dT})$ regret bound. However the running time of their algorithm is prohibitive: $O((dT)^{O(k)})$ time per iteration. In the same paper, they pose the open problem of finding a computationally efficient no-regret algorithm for the problem. The hardness result in this paper resolves this open problem.

On the computational hardness side, it is known that it is \NP-hard to compute the optimal sparse linear regressor \citep{FKT, natarajan}. The hardness result in this paper is in fact inspired by the work of \citet{FKT}, who proved that it is computationally hard to find even an approximately optimal sparse linear regressor for an ordinary least squares regression problem given a batch of labeled data. While these results imply that it is hard to {\em properly}\footnote{{\em Proper} learning means finding the optimal sparse linear regressor, whereas {\em improper} learning means finding an arbitrary predictor with performance comparable to that
of the optimal sparse linear regressor.} solve the offline problem, in the online setting we allow {\em improper} learning, and hence these prior results don't yield hardness results for the online problem considered in this paper.

\section{Notation and Setup}

We use the notation $[d] = \{1, 2, \ldots, d\}$ to refer to the coordinates. All vectors in this paper are in $\R^d$, and all matrices in $\R^{d \times d}$. For a subset $S$ of $[d]$, and a vector $\x$, we use the notation $\x(S)$ to denote the projection of $\x$ on the coordinates indexed by $S$. We also use the notation $\I_S$ to denote the diagonal matrix which has ones in the coordinates indexed by $S$ and zeros elsewhere: this is the identity matrix on the subspace of $\R^d$ induced by the coordinates in $S$, as well as the projection matrix for this subspace. We use the notation $\|\cdot\|$ to denote the $\ell_2$ norm in $\R^d$ and $\|\cdot\|_0$ to denote the zero ``norm,'' i.e., the number of nonzero coordinates.

We consider a prediction problem in which the examples are vectors in $\R^d$ with $\ell_2$ norm  bounded by $1$, and labels are in the range $[-1, 1]$. We use square loss to measure the accuracy of a prediction: i.e., for a labeled example $(\x, y) \in \R^d \times [-1, 1]$, the loss of a prediction $\hy$ is $(\hy - y)^2$. The learner's task is to make predictions online as examples arrive one by one based on observing only $k$ out of $d$ features of the learner's choosing on any example (the learner is allowed to choose different subsets of features to observe in each round). The learner's goal is to minimize regret relative to the best $k$-sparse linear regressor whose $\ell_2$ norm is bounded by $1$.

Formally, for $t = 1, 2, \ldots, T$, the learner:
\begin{enumerate}
	\item selects a subset $S_t \subseteq [d]$ of size at most $k$,
	\item observes $\x_t(S_t)$, i.e., the values of the features of $\x_t$ restricted to the subset $S_t$,
	\item makes a prediction $\hy_t \in [-1, 1]$,
	\item observes the true label $y_t$, and suffers loss $(\hy_t - y_t)^2$.
\end{enumerate}
Define regret of the learner as
\[\text{Regret}\ :=\ \sum_{t=1}^T (\hy_t - y_t)^2 - \min_{\w:\ \|\w\|_0 \leq k,\ \|\w\| \leq 1} \sum_{t=1}^T (\w \cdot \x_t - y_t)^2. \]
In case $\hy_t$ is chosen using randomization, we consider expected regret instead. 

Given the \NP-hardness of computing the optimal $k$-sparse linear regressor \citep{FKT, natarajan}, we also consider a variant of the problem which gives the learner more flexibility than the comparator: the learner is allowed to choose $k' \geq k$ coordinates to query in each round. The definition of the regret remains the same. We call this the {\em $(k, k', d)$-online sparse regression problem}.

We are interested in the following two goals\footnote{In this paper, we use the $\poly(\cdot)$ notation to denote a polynomially-bounded function of its arguments.}:
\begin{enumerate}
	\item (No Regret) Make predictions $\hy_t$ so that regret is bounded by $\poly(d) T^{1-\delta}$ for some $\delta > 0$.

	\item (Efficiency) Make these predictions efficiently, i.e., in $\poly(d, T)$ time per iteration.
\end{enumerate}

In this paper, we show it is possible to get an {\em inefficient} no-regret algorithm for the online sparse regression problem. Complementing this result, we also show that an {\em efficient} no-regret algorithm cannot exist, assuming the standard hardness assumption that $\NP \not\subseteq \BPP$. 

\section{Upper bound}
\def\etaH{\eta_\textsc{Hedge}}
\def\etaGD{\eta_\textsc{SGD}}

In this section we give an {\em inefficient} algorithm for the $(k, k', d)$-online sparse regression problem which obtains an expected regret of $O(\frac{d^2}{(k'-k)^2}\sqrt{k \log(d)T})$. The algorithm needs $k'$ to be at least $k + 2$. It is inefficient because it maintains statistics for every subset of $[d]$ of size $k$, of which there are ${d \choose k}$. 

At a high level, the algorithm runs an experts algorithm (specifically, Hedge) treating all such subsets as experts. Each expert internally runs stochastic gradient descent only on the coordinates specified by the corresponding subset, ensuring low regret to any bounded norm parameter vector that is nonzero only on those coordinates. The Hedge algorithm ensures low regret to the best subset of coordinates, and thus the overall algorithm achieves low regret with respect to any $k$-sparse parameter vector. The necessity of using $k' \geq k + 2$ features in the algorithm is that the algorithm uses the additional $k' - k$ features to generate unbiased estimators for $\x_t\x_t^\top$ and $y_t \x_t$ in each round, which are needed to generate stochastic gradients for all the experts.
These estimators have large variance unless $k'-k$ is large. %Specifically, the variance is proportional to $O(d^2/(k'-k)^2)$. 

The pseudocode is given in Algorithm~\ref{algorithm:osr}. In the algorithm, in round $t$, the algorithm generates a distribution $D_t$ over the subsets of $[d]$ of size $k$; for any such subset $S$, we use the notation $D_t(S)$ to denote the probability of choosing the set $S$ in this distribution. We also define the function $\Pi$ on $\R^d$ to be the projection onto the unit ball, i.e., for $\w \in \R^d$, $\Pi(\w) = \w$ if $\|\w\| \leq 1$, and $\Pi(\w) = \frac{1}{\|\w\|}\w$ otherwise.

\begin{algorithm}[h]
\caption{\textsc{Algorithm for Online Sparse Regression}
\label{algorithm:osr}}
\begin{algorithmic}[1]
\STATE Define the parameters $p = \frac{k'-k}{d}$, $q = \frac{(k'-k)(k'-k-1)}{d(d-1)}$, $\etaH = q \sqrt{\frac{\ln(d)}{T}}$, and $\etaGD = q\sqrt{\frac{1}{T}}$.

\STATE Let $D_1$ be the uniform distribution over all subsets of $[d]$ of size $k$.

\STATE For every subset $S$ of $[d]$ of size $k$, let $\w_{S, 1} = \zero$, the all-zeros vector in $\R^d$.

\FOR{$t=1, 2, \ldots, T$}
	\STATE Sample a subset $\hS_t$ of $[d]$ of size $k$ from $D_t$, and a subset $R_t$ of $[d]$ of size $k' - k$ drawn uniformly at random, independently of $\hS_t$.

	\STATE Acquire $\x_t(S_t)$ for $S_t:=\hS_t \cup R_t$. 

	\STATE Make the prediction $\hy_t = \w_{\hS_t, t} \cdot \x_t$ and obtain the true label $y_t$. 

	\STATE Compute the matrix $\X_t \in \R^{d \times d}$ and the vector $\z_t \in \R^d$ defined as follows:
		\[ \X_t(i,j) = \begin{cases}
		\frac{\x_t(i)^2}{p} & \text{ if } i = j \text{ and } i \in R_t \\
		\frac{\x_t(i)\x_t(j)}{q} & \text{ if } i \neq j  \text{ and } i, j \in R_t\\
		0 & \text{ otherwise,}
		\end{cases}
		\quad
		\text{ and }
		\quad
	 	\z_t(i) = \begin{cases}
		\frac{y_t\x_t(i)}{p} & \text{ if } i\in R_t\\
		0 & \text{ otherwise,}
		\end{cases}\]

	\STATE Update the distribution over the subsets: for all subsets $S$ of $[d]$ of size $k$, let
	\[ D_{t+1}(S)\ =\ D_t(S) \exp(-\etaH(\w_{S, t}^\top \X_t \w_{S, t} - 2\z_t^\top \w_{S, t} + y_t^2))/Z_t, \]
	where $Z_t$ is the normalization factor to make $D_{t+1}$ a distribution.

	\STATE For each subset $S$ of $[d]$ of size $k$, let
	\[ \w_{S, t+1}\ =\ \Pi(\w_{S, t} - 2\etaGD \I_S(\X_t \w_{S, t} - \z_t)).\]
\ENDFOR
\end{algorithmic}
\end{algorithm}

\begin{theorem} \label{thm:regret-bound}
	There is an algorithm for the online sparse regression problem with any given parameters $(k, k', d)$ such that $k' \geq k + 2$ running in $O({d \choose k} \cdot k')$ time per iteration with $O(\frac{d^2}{(k' - k)^2}\sqrt{k \log(d)T})$ expected regret.
\end{theorem}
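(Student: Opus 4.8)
The plan is to treat Algorithm~\ref{algorithm:osr} as a two-level online learner and bound its expected regret by splitting it into the regret of the outer Hedge over the $\binom{d}{k}$ subsets and the regret of the inner stochastic gradient descent run by the single expert that matches the comparator. The first ingredient I would establish is unbiasedness: since $R_t$ is a uniformly random set of size $k'-k$, a fixed coordinate lies in $R_t$ with probability $p=\frac{k'-k}{d}$ and a fixed pair of distinct coordinates with probability $q=\frac{(k'-k)(k'-k-1)}{d(d-1)}$, so conditioned on the algorithm's randomness through round $t-1$ we get $\E[\X_t]=\x_t\x_t^\top$ and $\E[\z_t]=y_t\x_t$. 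Consequently the Hedge loss $\ell_t(S):=\w_{S,t}^\top\X_t\w_{S,t}-2\z_t^\top\w_{S,t}+y_t^2$ is an unbiased estimate of the true square loss $(\w_{S,t}\cdot\x_t-y_t)^2$, and the vector $g_{S,t}:=2\I_S(\X_t\w_{S,t}-\z_t)$ used in the gradient step is an unbiased estimate of the restricted gradient $2\I_S\x_t(\w_{S,t}\cdot\x_t-y_t)$. These identities, which rely only on $R_t$ being independent of $\hS_t$ and on $D_t,\w_{S,t}$ being measurable with respect to the past, are what let me pass between estimated and true quantities under expectation.

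Next I would fix an optimal $k$-sparse comparator $\w^\star$ and a size-$k$ set $S^\star$ containing its support. Averaging the prediction loss over $\hS_t\sim D_t$ gives $\E\big[\sum_t(\hy_t-y_t)^2\big]=\E\big[\sum_t\sum_S D_t(S)(\w_{S,t}\cdot\x_t-y_t)^2\big]$, and I would decompose the expected regret as the expected Hedge regret relative to $S^\star$ (measured in true losses) plus the expected regret of the iterates $\w_{S^\star,t}$ relative to $\w^\star$. For the Hedge part I would apply the standard second-order bound to the \emph{estimated} losses, $\sum_t\langle D_t,\ell_t\rangle-\sum_t\ell_t(S^\star)\le\frac{\ln N}{\etaH}+\etaH\sum_t\sum_S D_t(S)\ell_t(S)^2$ with $N=\binom{d}{k}$, and then take expectations; because $D_t$ is measurable with respect to the past and $\ell_t$ is unbiased, the left-hand side in expectation is exactly the true-loss Hedge regret. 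This inequality is valid as long as $\etaH|\ell_t(S)|\le1$, which I would check deterministically: using $\|\w_{S,t}\|\le1$ and $\|\x_t\|\le1$ (so that Cauchy--Schwarz collapses the coordinate sums to $1$) gives $|\ell_t(S)|=O(1/q)$, and with $\etaH=\Theta(q)\cdot\sqrt{\ln(d)/T}$ the product is $O(\sqrt{\ln(d)/T})\le1$ for $T$ not too small.

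For the inner part I would use the projected online gradient descent guarantee: by convexity of the square loss and unbiasedness of $g_{S^\star,t}$, $\E\big[\sum_t(\w_{S^\star,t}\cdot\x_t-y_t)^2-(\w^\star\cdot\x_t-y_t)^2\big]\le\frac{\|\w_{S^\star,1}-\w^\star\|^2}{2\etaGD}+\frac{\etaGD}{2}\E\big[\sum_t\|g_{S^\star,t}\|^2\big]$, where the feasible set (the unit ball restricted to $S^\star$) has diameter $O(1)$. The crux of the whole argument, and the step I expect to be the main obstacle, is controlling the two second-moment terms $\E[\ell_t(S)^2]$ and $\E[\|g_{S,t}\|^2]$. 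Both are quadratic or quartic forms in the entries of $\X_t$ and $\z_t$, and their off-diagonal contributions carry the large factor $1/q$; the dominant terms are those in which a pair of coordinates coincides, an event of probability $q$, so that a factor $\frac{1}{q^2}\cdot q$ survives. A careful expansion using $\|\w_{S,t}\|\le1$, $\|\x_t\|\le1$ and $|y_t|\le1$ to collapse the coordinate sums should show these second moments scale as a power of $1/q=\Theta\!\big(\frac{d^2}{(k'-k)^2}\big)$; this is precisely why $k'-k\ge2$ is required (so that $q>0$) and why the step sizes are taken proportional to $q$, keeping the variance terms $\etaH T\cdot\mathrm{var}$ and $\etaGD T\cdot\mathrm{var}$ in check.

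Finally I would substitute $p,q,\etaH,\etaGD$ from the first line of the algorithm. With $1/q=\Theta\!\big(\frac{d^2}{(k'-k)^2}\big)$ and $\ln N=\ln\binom{d}{k}=O(k\log d)$, the step sizes $\propto q$ force the leading $\frac{\ln N}{\etaH}$ and $\frac{1}{\etaGD}$ terms to carry the $\frac{d^2}{(k'-k)^2}$ factor, and balancing these against the variance terms yields the claimed expected regret $O\!\big(\frac{d^2}{(k'-k)^2}\sqrt{k\log(d)T}\big)$, with the SGD contribution being of lower order. The running-time bound is then immediate from the structure of the algorithm: each round performs one multiplicative-weight update and one projected gradient step for each of the $\binom{d}{k}$ subsets, and since $\X_t=\frac1q\,\mathbf{v}\mathbf{v}^\top+(\tfrac1p-\tfrac1q)\mathrm{diag}(\mathbf{v}^2)$ for $\mathbf{v}$ equal to $\x_t$ restricted to $R_t$ is a rank-one-plus-diagonal matrix supported on the $k'-k\le k'$ coordinates of $R_t$, every per-subset update can be carried out in $O(k')$ time, giving $O(\binom{d}{k}\,k')$ per round.
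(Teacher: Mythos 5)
Your proposal is correct and follows essentially the same route as the paper: unbiased estimators $\X_t,\z_t$ built from the uniformly random probe set $R_t$ (using $k'-k\ge 2$ so that $q>0$), an outer Hedge regret bound over the ${d\choose k}$ subsets applied to the estimated losses and converted to true losses via unbiasedness, an inner projected-SGD bound for the subset containing the comparator's support, and deterministic $O(1/q)$ magnitude bounds via Cauchy--Schwarz. The one remark worth making is that the step you flag as ``the main obstacle''---a careful second-moment expansion of $\E[\ell_t(S)^2]$ and $\E[\|g_{S,t}\|^2]$---is not actually needed for the claimed bound: the deterministic bounds $|\ell_t(S)|=O(1/q)$ and $\|g_{S,t}\|=O(1/q)$ that you already establish immediately give $O(1/q^2)$ bounds on those second moments, which with $\etaH,\etaGD=\Theta(q/\sqrt{T})$ already balance to the stated $O(\tfrac{d^2}{(k'-k)^2}\sqrt{k\log(d)T})$ regret; the paper simply invokes the standard first-order Hedge and SGD bounds with worst-case loss/gradient magnitude $O(1/q)$ and sidesteps any variance computation.
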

\begin{proof}
The algorithm is given in Algorithm~\ref{algorithm:osr}. Since the algorithm maintains a parameter vector in $\R^k$ for each subset of $[d]$ of size $k$, the running time is dominated by the time to sample from $D_t$ and update it, and the time to update the parameter vectors. The updates can be implemented in $O(k')$ time, so overall each round can be implemented in $O({d \choose k} \cdot k')$ time.

We now analyze the regret of the algorithm. Let $\E_t[\cdot]$ denote the expectation conditioned on all the randomness prior to round $t$. Then, it is easy to check,
using the fact that $k'-k\ge 2$, that the construction of $\X_t$ and $\z_t$ in Step 8 of the algorithm has the following property:
\begin{equation} \label{eq:unbiased}
	\E_t[\X_t] = \x_t\x_t^\top \text{ and } \E_t[\z_t] = y_t\x_t.
\end{equation}

Next, notice that in Step 9, the algorithm runs the standard Hedge-algorithm update (see, for example, Section 2.1 in \citep{mwsurvey}) on ${d \choose k}$ experts, one for each subset of $[d]$ of size $k$, where, in round $t$, the cost 
of the expert corresponding to subset $S$ is defined to be\footnote{Recall that the costs in Hedge may be chosen adaptively.}
\begin{equation} \label{eq:cost}
\w_{S, t}^\top \X_t \w_{S, t} - 2\z_t^\top \w_{S, t} + y_t^2.
\end{equation}
It is easy to check, using the facts that $\|\x_t\| \leq 1$, $\|\w_{S, t}\| \leq 1$ and $p \geq q$, that the cost 
(\ref{eq:cost}) is bounded deterministically in absolute value by $O(\frac{1}{q}) = O(\frac{d^2}{(k' - k)^2})$.
Let $\E_{D_t}[\cdot]$ denote the expectation over the random choice of $\hS_t$ from the distribution $D_t$ conditioned on all other randomness up to and including round $t$. Since there are ${d \choose k}\le d^k$ experts in the Hedge algorithm here, the standard regret bound for Hedge~\citep[Theorem
2.3]{mwsurvey} with the specified value of $\etaH$ implies that for any subset $S$ of $[d]$ of size $k$, using
$\ln {d \choose k}\le k \ln d$, we have
\begin{equation} \label{eq:hedge-regret}
	\sum_{t=1}^T \E_{D_t}[\w_{\hS_t, t} \X_t \w_{\hS_t, t} - 2\z_t^\top \w_{\hS_t, t} + y_t^2]\ \leq\ \sum_{t=1}^T (\w_{S, t}^\top \X_t \w_{S, t} - 2\z_t^\top \w_{S, t} + y_t^2) + O(\tfrac{d^2}{(k' - k)^2}\sqrt{k\ln(d)T}).
\end{equation}
Next, we note, using \eqref{eq:unbiased} and the fact that conditioned on the randomness prior to round $t$, $\w_{S, t}$ is completely determined, that
(for any $S$)
\begin{equation} \label{eq:hedge-unbiased}
	\E_t[\w_{S, t}^\top \X_t \w_{S, t} - 2\z_t^\top \w_{S, t} + y_t^2]\ =\ \w_{S, t}^\top \x_t\x_t^\top \w_{S, t} - 2y_t\x_t^\top \w_{S, t} + y_t^2\ =\ (\w_{S, t} \cdot \x_t - y_t)^2.
\end{equation}
Taking expectations on both sides of \eqref{eq:hedge-regret} over all the randomness in the algorithm, and using \eqref{eq:hedge-unbiased}, we get that for any subset $S$ of $[d]$ of size $k$, we have
\begin{equation} \label{eq:experts-regret}
	\sum_{t=1}^T \E[(\w_{\hS_t, t} \cdot \x_t - y_t)^2]\ \leq\ \sum_{t=1}^T \E[(\w_{S, t} \cdot \x_t - y_t)^2] + O(\tfrac{d^2}{(k' - k)^2}\sqrt{k\log(d)T}).
\end{equation}
The left-hand side of (\ref{eq:experts-regret}) equals $\sum_{t=1}^T \E[(\hy_t - y_t)^2]$. We now analyze the right-hand side.

For any given subset $S$ of $[d]$ of size $k$, we claim that in Step 10 of the algorithm, the parameter vector $\w_{S, t}$ is updated using stochastic gradient descent with the loss function $\ell_t(\w) := (\x_t^\top \I_S \w - y_t)^2$ over the set over $\{\w\ |\ ||\w||_2\le 1\}$, only on the coordinates in $S$, while the coordinates not in $S$ are fixed to $0$. To prove this claim, first, we note that the premultiplication by $\I_S$ in the update in Step 10 ensures that in the parameter vector $\w_{S, t+1}$ all coordinates that are not in $S$ are set to $0$, assuming that
coordinates of $\w_{S,t}$ not in $S$ were 0. 

Next, at time $t$, consider the loss function $\ell_t(\w) = (\x_t^\top \I_S \w - y_t)^2$. The gradient of this loss function at $\w_{S, t}$ is 
\[\nabla \ell_t(\w_{S, t})\ =\ 2(\x_t^\top \I_S \w_{S, t} - y_t) \I_S \x_t\ =\ 2\I_S(\x_t\x_t^\top \w_{S, t} - y_t \x_t),\] 
where we use the fact that $\I_S \w_{S, t} = \w_{S, t}$ since $\w_{S, t}$ has zeros in coordinates not in $S$. Now, by \eqref{eq:unbiased}, we have
\[ \E_t[2\I_S(\X_t \w_{S, t} - \z_t)]\ =\ 2\I_S(\x_t\x_t^\top \w_{S, t} - y_t \x_t),\]
and thus, Step 10 of the algorithm is a stochastic gradient descent update as claimed.
Furthermore, a calculation similar to the one for bounding the loss of the experts in the Hedge algorithm shows that the norm of the stochastic gradient is bounded deterministically by $O(\frac{1}{q})$,
which is  $O(\frac{d^2}{(k' - k)^2})$.

Using a standard regret bound for stochastic gradient descent (see, for example, Lemma 3.1 in \citep{FKM}) with the specified value of $\etaGD$, we conclude that for any fixed vector $\w$ of $\ell_2$ norm at most $1$, we have,
\[ \sum_{t=1}^T \E[(\x_t^\top \I_S \w_{S, t} - y_t)^2]\ \leq\ \sum_{t=1}^T (\x_t^\top \I_S \w - y_t)^2 + O(\tfrac{d^2}{(k' - k)^2}\sqrt{T}).\]
Since $\I_S \w_{S, t} = \w_{S, t}$, the left hand side of the above inequality equals $\sum_{t=1}^T \E[(\w_{S, t} \cdot \x_t - y_t)^2]$.

% (In the language of Lemma 3.1 of \citep{FKM}, except that 
% the ``$S$'' in the lemma is changed to ``$Z$'' to avoid
% the notational clash with our ``$S$'', we define the feasible set $Z$ to be 
% $\{\w| ||\w||_2\le 1\}$ (so that $R=1$),
% and we define $c_{t}(w)=(x_t^T I_S(w)-y_t)^2$.    
% Notice that since $x_t,y_t$ are chosen deterministically in advance, $(c_t)$ is a family of fixed
% functions.)

Finally, let $\w$ be an arbitrary $k$-sparse vector of $\ell_2$ norm at most $1$. Let $S = \{i\ |\ w_i \neq 0\}$. Note that $|S| \leq k$, and $\I_S(\w)=\w$. Applying the above bound for this set $S$, we get
% Since $\I_S \w_{S, t} = \w_{S, t}$, we conclude that for any parameter vector $\w$ that has nonzeros only in the coordinates in $S$ of $\ell_2$ norm at most $1$, we have
\begin{equation} \label{eq:sgd-regret}
	\sum_{t=1}^T \E[(\w_{S, t} \cdot \x_t - y_t)^2]\ \leq\ \sum_{t=1}^T (\w \cdot \x_t - y_t)^2 + O(\tfrac{d^2}{(k' - k)^2}\sqrt{T}).
\end{equation}
Combining the inequality (\ref{eq:sgd-regret}) with inequality (\ref{eq:experts-regret}), we conclude that
\[
\sum_{t=1}^T \E[(\hy_t - y_t)^2]\ \leq\ \sum_{t=1}^T (\w \cdot \x_t - y_t)^2 + O(\tfrac{d^2}{(k' - k)^2}\sqrt{k \log(d)T}).
\]
This gives us the required regret bound.
\end{proof}

\section{Computational lower bound}

In this section we show that there cannot exist an efficient no-regret algorithm for the online sparse regression problem unless $\NP \subseteq \BPP$. This hardness result follows from the hardness of approximating the \setcover problem. We give a reduction showing that if there were an efficient no-regret algorithm $\algosr$ for the online sparse regression problem, then we could distinguish, in randomized polynomial time, between two instances of the \setcover problem: in one of
which there is a small set cover, and in the other of which any set cover is large. This task is known to be \NP-hard for specific parameter values. Specifically, our reduction has the following properties:
\begin{enumerate}
 	\item If there is a small set cover, then in the induced online sparse regression problem there is a $k$-sparse parameter vector (of
$\ell_2$ norm at most 1) giving $0$ loss, and thus the algorithm $\algosr$ must have small total loss (equal to the regret) as well. 

 	\item If there is no small set cover, then the prediction made by $\algosr$ in any round has at least a constant loss in expectation, which implies that its total (expected) loss must be large, in fact, linear in $T$.
 \end{enumerate}
 By measuring the total loss of the algorithm, we can distinguish between the the two instances of the \setcover problem mentioned above with probability at least $3/4$, thus yielding a \BPP algorithm for an \NP-hard problem.

The starting point for our reduction is the work of \citet{DinurSteurer} who give a polynomial-time reduction of deciding satisfiability of \CNF formulas to distinguishing instances of \setcover with certain useful combinatorial properties. We denote the satisfiability problem of \CNF formulas by \SAT.

\begin{reduction} \label{reduction:dinur-steurer}
	For any given constant $D > 0$, there is a constant $c_D \in (0, 1)$ and a $poly(n^D)$-time algorithm that takes a $\CNF$ formula $\phi$ of size $n$ as input and constructs a \setcover instance over a ground set of size $m = \poly(n^D)$ with $d = \poly(n)$ sets, with the following properties:
\begin{enumerate}
	\item if $\phi \in \SAT$, then there is a collection of $k = O(d^{c_D})$ sets, which covers each element {\em exactly once}, and
	\item if $\phi \notin \SAT$, then no collection of $k' = \lfloor D\ln(d) k\rfloor $ sets covers all elements; i.e., at least one element is left uncovered.
\end{enumerate}
The \setcover instance generated from $\phi$ can be encoded as a binary matrix $\M_\phi \in \{0, 1\}^{m \times d}$ with the rows corresponding to the elements of the ground set, and the columns correspond to the sets, such that each column is the indicator vector of the corresponding set.
\end{reduction}

Using this reduction, we now show how an efficient, no-regret algorithm for online sparse regression can be used to give a \BPP algorithm for \SAT.

\begin{algorithm}[t]
\caption{\textsc{Algorithm $\algsat$ for deciding satisfiability of \CNF formulas}
\label{algorithm:sat-decider}}
\begin{algorithmic}[1]
\REQUIRE A constant $D > 0$, and an algorithm $\algosr$ for the $(k, k', d)$-online sparse regression problem with $k = O(d^{c_D})$, where $c_D$ is the constant from Reduction~\ref{reduction:dinur-steurer}, and $k' = \lfloor D \ln(d) k \rfloor$, that runs in $\poly(d, T)$ time per iteration with regret bounded by $p(d) \cdot T^{1-\delta}$ with probability at least $3/4$.
\REQUIRE A \CNF formula $\phi$.
\STATE Compute the matrix $\M_\phi$ and the associated parameters $k, k', d, m$ from Reduction~\ref{reduction:dinur-steurer}.
\STATE Run $\algosr$ with the parameters $k, k', d$ for $T := \lceil \max\{(2p(d) mdk)^{1/\delta}, 256m^2d^2k^2\}\rceil$ iterations.
\FOR{$t=1, 2, \ldots, T$}
\STATE Sample a row of $\M_\phi$ uniformly at random; call it $\hx_t$.
\STATE Sample $\sigma_t \in \{-1, 1\}$ uniformly at random independently of $\hx_t$.
\STATE Set $\x_t = \frac{\sigma_t}{\sqrt{d}} \hx_t$ and $y_t = \frac{\sigma_t}{\sqrt{dk}}$.
\STATE Obtain a set of coordinates $S_t$ of size at most $k'$ by running $\algosr$, and provide it the coordinates $\x_t(S_t)$.
\STATE Obtain the prediction $\hy_t$ from $\algosr$, and provide it the true label $y_t$.
\ENDFOR
\IF{$\sum_{t=1}^T (y_t - \hy_t)^2 \leq \frac{T}{2mdk}$}
\STATE Output ``satisfiable''.
\ELSE
\STATE Output ``unsatisfiable''.
\ENDIF
\end{algorithmic}
\end{algorithm}

\begin{theorem} \label{thm:hardness} Let $D > 0$ be any given constant. Suppose there is an algorithm, $\algosr$, for the $(k, k', d)$-online sparse regression problem with $k = O(d^{c_D})$, where $c_D$ is the constant from Reduction~\ref{reduction:dinur-steurer}, and $k' = \lfloor D \ln(d) k\rfloor $, that runs in $\poly(d, T)$ time per iteration and has expected regret bounded by $\poly(d) T^{1-\delta}$ for some constant $\delta > 0$. Then $\NP \subseteq \BPP$.
\end{theorem}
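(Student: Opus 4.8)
The plan is to show that the reduction in Algorithm~\ref{algorithm:sat-decider} converts a hypothetical efficient no-regret $\algosr$ into a \BPP decider for \SAT, with Reduction~\ref{reduction:dinur-steurer} supplying the gap. First I would dispense with the bookkeeping: each generated example is admissible, since $\|\x_t\|^2 = \frac{1}{d}\|\hx_t\|^2 \le 1$ (rows of $\M_\phi$ lie in $\{0,1\}^d$) and $|y_t| = \frac{1}{\sqrt{dk}} \le 1$; and $\algsat$ runs in randomized polynomial time, because $m, d, k$ and $p(d)$ are all $\poly(n)$ while $\delta$ is constant, so $T = \poly(n)$ and each of the $T$ calls to $\algosr$ costs $\poly(d,T) = \poly(n)$. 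I would also convert the hypothesized \emph{expected} regret bound into the high-probability form Algorithm~\ref{algorithm:sat-decider} wants: in the satisfiable case the comparator loss is $0$, so regret equals the nonnegative cumulative loss, and Markov's inequality yields the bound $p(d)T^{1-\delta}$ with probability $\ge 3/4$ after inflating $\poly(d)$ by a constant factor.

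For the completeness direction ($\phi \in \SAT$) I would exhibit a zero-loss comparator. If $S^\star$ is a collection of $k$ sets covering every element \emph{exactly once}, set $\w^\star = \frac{1}{\sqrt{k}}\one_{S^\star}$; then $\|\w^\star\| = 1$, $\|\w^\star\|_0 = k$, and for every sampled row $\w^\star \cdot \x_t = \frac{\sigma_t}{\sqrt d}(\w^\star \cdot \hx_t) = \frac{\sigma_t}{\sqrt{dk}} = y_t$, where the ``exactly once'' property is essential since it forces $\w^\star \cdot \hx_t = \frac{1}{\sqrt k}$. Hence the offline optimum is $0$, the regret equals $\sum_t(\hy_t - y_t)^2$, and the choice $T \ge (2p(d)mdk)^{1/\delta}$ makes $p(d)T^{1-\delta} \le \frac{T}{2mdk}$, so with probability $\ge 3/4$ the test fires and $\algsat$ outputs ``satisfiable''.

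For the soundness direction ($\phi \notin \SAT$) I would prove a per-round expected-loss lower bound and then concentrate. Let $\mathcal F_{t-1}$ be the randomness before the draw of $(\hx_t, \sigma_t)$, so that $S_t$ (of size $\le k'$) is $\mathcal F_{t-1}$-measurable while $\hx_t, \sigma_t$ are fresh and independent. Since no $k'$ sets cover all elements, at least one of the $m$ rows is uncovered by $S_t$, so with probability $\ge \frac{1}{m}$ we get $\x_t(S_t) = \zero$; conditioned on this event the learner's entire view is independent of $\sigma_t$, whence $\hy_t$ is independent of $y_t$, and as $y_t = \pm\frac{1}{\sqrt{dk}}$ uniformly, $\E[(\hy_t - y_t)^2 \mid \cdot] = \hy_t^2 + \frac{1}{dk} \ge \frac{1}{dk}$. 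This gives $\E[(\hy_t - y_t)^2 \mid \mathcal F_{t-1}] \ge \frac{1}{mdk}$. The centered sums $M_t = \sum_{s \le t}\big((\hy_s - y_s)^2 - \E[(\hy_s-y_s)^2\mid\mathcal F_{s-1}]\big)$ form a martingale with increments bounded by $4$, so Azuma--Hoeffding together with $T \ge 256 m^2d^2k^2$ yields $\Pr\big[\sum_t(\hy_t-y_t)^2 \le \frac{T}{2mdk}\big] \le e^{-2} < \frac14$, and $\algsat$ outputs ``unsatisfiable'' with probability $> 3/4$.

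Combining the two cases, $\algsat$ decides \SAT correctly with probability $\ge 3/4$ in randomized polynomial time, placing the \NP-complete problem \SAT in \BPP and hence $\NP \subseteq \BPP$. I expect the soundness direction to be the main obstacle: the crux is the information-theoretic claim that an uncovered row hides $\sigma_t$ completely (so no predictor can beat the variance $\frac{1}{dk}$), combined with the martingale argument needed to upgrade the $\Omega(1/(mdk))$ expected per-round loss into a high-probability lower bound on the total loss. This is precisely where both defining lower bounds on $T$ are consumed and where the adaptive, $\mathcal F_{t-1}$-measurable choice of $S_t$ must be handled with care.
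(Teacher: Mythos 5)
Your proposal is correct and follows essentially the same route as the paper's proof: the same reduction via Algorithm~\ref{algorithm:sat-decider}, the same zero-loss comparator $\frac{1}{\sqrt k}\one_{S^\star}$ in the satisfiable case, and the same uncovered-row argument giving a per-round expected loss of $\frac{1}{mdk}$ followed by Azuma's inequality in the unsatisfiable case. Your observation that Markov's inequality should be applied to the nonnegative cumulative loss (rather than to the regret, which could a priori be negative) is a minor but legitimate refinement of the paper's phrasing.
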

\begin{proof}
Since the expected regret of $\algosr$ is bounded by $p(d) T^{1-\delta}$ (where $p(d)$ is a polynomial function of
$d$), by Markov's inequality we conclude that with probability at least $3/4$, the regret of $\algosr$ is bounded by $p(d) \cdot T^{1-\delta}$. Figure~\ref{algorithm:sat-decider} gives a randomized algorithm, $\algsat$, for deciding satisfiability of a given \CNF formula $\phi$ using the algorithm $\algosr$. Note that the feature vectors (i.e., the $\x_t$ vectors) generated by $\algsat$ are bounded in $\ell_2$ norm by $1$, as required. It is clear that $\algsat$ is a polynomial-time algorithm since $T$ is a polynomial function of $n$ (since $m, k, d, p(d)$ are polynomial functions of $n$), and $\algosr$ runs in $\poly(d, T)$ time per iteration.

We now claim that this algorithm correctly decides satisfiability of $\phi$ with probability at least $3/4$, and is hence a \BPP algorithm for $\SAT$. 

To prove this, suppose $\phi \in \SAT$. Then, there are $k$ sets in the \setcover which cover all elements with each element being covered exactly once. Consider the $k$-sparse parameter vector $\w$ which has $\frac{1}{\sqrt{k}}$ in the positions corresponding to these $k$ sets and $0$ elsewhere. Note that $\|\w\| \leq 1$, as required. Note that since this
collection of $k$ sets covers each element exactly once, we have $\M_\phi \w = \frac{1}{\sqrt{k}}\one$, where $\one$ is the all-1's vector. In particular, since $\hx_t$ is a row of $\M_\phi$, we have 
\[\w\cdot \x_t\ =\ \w \cdot \left(\frac{\sigma_t}{\sqrt{d}} \hx_t\right)\ =\ \frac{\sigma_t}{\sqrt{dk}}\ =\ y_t.\] 
Thus, $(\w \cdot \x_t - y_t)^2 = 0$ for all rounds $t$. Since algorithm $\algosr$ has regret at most $p(d) \cdot T^{1-\delta}$ with probability at least $3/4$, its total loss $\sum_{t=1}^T (\hy_t - y_t)^2$ is bounded by $p(d) \cdot T^{1-\delta} \leq \frac{T}{2mdk}$ (since $T \geq (2p(d) mdk)^{1/\delta}$) with probability at least $3/4$. Thus, in this case algorithm $\algsat$ correctly outputs ``satisfiable'' with probability at least $3/4$.

Next, suppose $\phi \notin \SAT$. Fix any round $t$ and let $S_t$ be the set of coordinates of size at most $k'$ selected by algorithm $\algosr$ to query. This set $S_t$ corresponds to $k'$ sets in the \setcover instance. Since $\phi \notin \SAT$, there is at least one element in the ground set that is not covered by any set among these $k'$ sets. This implies that there is at least one row of $\M_\phi$ that is $0$ in all the coordinates in $S_t$. Since $\hx_t$ is a uniformly random row of $\M_\phi$ chosen independently of $S_t$, we have
\[ \Pr[\x_t(S_t) = \zero]\ =\ \Pr[\hx_t(S_t) = \zero]\ \geq\ \frac{1}{m}.\]
Here, $\zero$ denotes the all-zeros vector of size $k'$. 

Now, we claim that $\E[y_t \hat y_t\ |\ \x_t(S_t) = \zero] = 0$. This is because the condition that $\x_t(S_t) = 0$ is equivalent to the
condition that $\hx_t (S_t) = 0$.  Since $y_t$ is chosen from $\{-\frac{1}{\sqrt{dk}}, \frac{1}{\sqrt{dk}}\}$ uniformly at random independently of $\hx_t$ and $\hy_t$, the claim follows. The expected loss of the online algorithm in round $t$ can now be bounded as follows:
\[\E[\left . (\hy_t - y_t)^2\ \right |\ \x_t(S_t) = \zero]\ =\ \E\left[\left . \hy_t^2 + \frac{1}{dk} - 2y_t\hat y_t\ \right |\ \x_t(S_t) = \zero\right]\]
\[ =\ \E\left[\left
. \hat y_t^2 + \frac{1}{dk}\  \right |\ \x_t(S_t) = \zero\right]\ \geq\ \frac{1}{dk},\]
and hence
\[\E[(y_t - \hat{y}_t)^2]\ \geq\ \E[(y_t - \hat{y}_t)^2\ |\ \x_t(S_t) = \zero]\cdot \Pr[\x_t(S_t) = \zero]\ \geq\ \frac{1}{dk}\cdot \frac{1}{m}\ =\ \frac{1}{mdk}.\] 
Let $\E_t[\cdot]$ denote expectation of a random variable conditioned on all randomness prior to round $t$. Since the choices of $\x_t$ and $y_t$ are independent of previous choices in each round, the same argument also implies that $\E_t[(y_t - \hat{y}_t)^2]\ \geq\ \frac{1}{mdk}$. Applying Azuma's inequality (see Theorem 7.2.1 in \citep{AlonSpencer}) to the martingale difference sequence $\E_t[(y_t - \hat{y}_t)^2] - (y_t - \hat{y}_t)^2$ for $t = 1, 2, \ldots, T$, since each term is bounded in absolute value by $4$, we get
\[ \Pr\left[\sum_{t=1}^T
\E_t[(y_t - \hat{y}_t)^2]
-(y_t - \hat{y}_t)^2 
 \geq 8\sqrt{T}\right]\ \leq\ 
\exp\left (-\tfrac{64T}{2\cdot 16T}\right )\ \leq\ \frac{1}{4}. \]
Thus, with probability at least $3/4$, the total loss $\sum_{t=1}^T (\hy_t - y_t)^2$ is greater than $\sum_{t=1}^T
\E_t[(y_t - \hat{y}_t)^2] - 8\sqrt{T} \geq \frac{1}{mdk}T - 8\sqrt{T} \geq \frac{T}{2mdk}$ (since $T \geq 256m^2d^2k^2$). Thus in this case the algorithm correctly outputs ``unsatisfiable'' with probability at least $3/4$.
\end{proof}

\paragraph{Parameter settings for hard instances.} Theorem~\ref{thm:hardness} implies that for any given constant $D > 0$, there is a constant $c_D$ such that the parameter settings $k = O(d^{c_D})$, and $k' = \lfloor D \ln(d) k \rfloor$ yield hard instances for the online sparse regression problem. The reduction of \citet{DinurSteurer} can be ``tweaked''\footnote{This is accomplished by simply replacing the \textsf{Label Cover} instance they construct with polynomially many disjoint copies of the same instance.} so that the $c_D$ is arbitrarily close to $1$ for any constant $D$.

We can now extend the hardness results to the parameter settings $k = O(d^\epsilon)$ for any $\epsilon \in (0, 1)$ and $k' = \lfloor D \ln(d) k \rfloor$ either by tweaking the reduction of \citet{DinurSteurer} so it yields $c_D = \epsilon$ if $\epsilon$ is close enough to $1$, or if $\epsilon$ is small, by adding $O(d^{1/\epsilon})$ all-zeros columns to the matrix $\M_\phi$. The two combinatorial properties of $\M_\phi$ in Reduction~\ref{reduction:dinur-steurer} are clearly still satisfied, and the proof of Theorem~\ref{thm:hardness} goes through.

\section{Conclusions}

In this paper, we prove that minimizing regret in the online sparse regression problem is computationally hard even if the learner is allowed access to many more features than the comparator, a sparse linear regressor. We complement this result by giving an inefficient no-regret algorithm. 

The main open question remaining from this work is what extra assumptions can one make on the examples arriving online to make the problem tractable. Note that the sequence of examples constructed in the lower bound proof is i.i.d., so clearly stronger assumptions than that are necessary to obtain any efficient algorithms.

\bibliographystyle{plainnat}
\bibliography{budgeted}

\end{document}